\crefname{theorem}{theorem}{theorems}\Crefname{theorem}{Theorem}{Theorems}
\crefname{assumption}{assumption}{assumptions}\Crefname{assumption}{Assumption}{Assumptions}
\crefname{lemma}{lemma}{lemmas}\Crefname{lemma}{Lemma}{Lemmas}
\crefname{proposition}{proposition}{propositions}\Crefname{proposition}{Proposition}{Propositions}
\crefname{corollary}{corollary}{corollaries}\Crefname{corollary}{Corollary}{Corollaries}
\crefname{definition}{definition}{definitions}\Crefname{definition}{Definition}{Definitions}
\crefname{remark}{remark}{remarks}\Crefname{remark}{Remark}{Remarks}
\crefname{example}{example}{examples}\Crefname{example}{Example}{Examples}
\crefname{algocf}{algorithm}{algorithms}\Crefname{algocf}{Algorithm}{Algorithms}
\DeclareMathOperator*{\argmax}{arg\,max}
\newcommand{\items}{D}             %
\newcommand{\nitems}{n}            %
\newcommand{\val}{v}               %
\newcommand{\TopK}{T_k^\star}      %
\newcommand{\thresh}{t^\star}      %
\newcommand{\gap}{\Delta}          %
\newcommand{\weak}{\mathsf{W}}     %
\newcommand{\strong}{\mathsf{S}}   %
\newcommand{\weakval}{\tilde v}    %
\newcommand{\weakmean}{\mu}        %
\newcommand{\CIlo}{L}
\newcommand{\CIhi}{U}
\newcommand{\CIrad}[1]{\varepsilon(#1)}
\newcommand{\CImax}{\varepsilon_{\max}}
\newcommand{\Conf}{\mathcal{E}}    %
\newcommand{\Prb}{\mathbb{P}}
\newcommand{\Exp}{\mathbb{E}}
\newcommand{\mnearties}[1]{m(#1)}
\newcommand{\algbase}{\textsc{STC}\xspace}     %
\newcommand{\ACE}{\textsc{ACE}\xspace}         %
\newcommand{\ACEW}{\textsc{ACE-W}\xspace}      %
\newcommand{\topkof}[2]{\ensuremath{\mathrm{top}_{#2}(#1)}} %
\newcommand{\Amb}{A}
\newcommand{\mmin}{w_{\min}}
\newcommand{\mmax}{w_{\max}} %
\title[Top-k on a Budget: Adaptive Ranking with Weak and Strong Oracles]{Top-k on a Budget: Adaptive Ranking with Weak and Strong Oracles}
\begin{document}

\maketitle

\begin{abstract}%
Identifying the top-$k$ items is fundamental but often prohibitive when exact valuations are expensive. We study a two-oracle setting with a fast, noisy weak oracle and a scarce, high-fidelity strong oracle (e.g., human expert verification or expensive simulation). 
We first analyze a simple screen-then-certify baseline (\algbase) and prove it makes at most $m(4\varepsilon_{\max})$ strong calls given jointly valid weak confidence intervals with maximum radius $\varepsilon_{\max}$, where $m(\cdot)$ denotes the near-tie mass around the top-$k$ threshold. We establish a conditional lower bound of $\Omega(m(\varepsilon_{\max}))$ for any algorithm given the same weak uncertainty.
Our main contribution is \ACE, an adaptive certification algorithm that focuses strong queries on critical boundary items, achieving the same $O(m(4\varepsilon_{\max}))$ bound while reducing strong calls in practice. We then introduce \ACEW, a fully adaptive two-phase method that allocates weak budget adaptively before running \ACE, further reducing strong costs. 
\end{abstract}

\begin{keywords}%
 top-$k$ identification, PAC, multi-fidelity
\end{keywords}

\section{Introduction}

Identifying the top-$k$ items from a collection is a central task across machine learning, data mining, and information retrieval.
Applications span recommender systems \citep{luo2025recranker}, data valuation \citep{ghorbani2019data}, explainability via feature attribution \citep{lundberg2017unified}, influence maximization in networks \citep{kempe2003maximizing}, and scientific discovery such as drug candidate selection.
The common challenge is that the valuation function determining item quality is often \emph{expensive} to compute, whether due to repeated model retraining, costly simulations, or large-scale graph computations \citep{ilyas2008survey}.
This challenge is often further amplified by high-dimensional representations~\citep{ren2021survey}.
For example, computing similarities in learned embedding spaces can incur substantial computational cost~\citep{johnson2019billion}, making brute-force scoring impractical~\citep{khattab2020colbert}.
These trends highlight a structural bottleneck: while many applications require identifying only the top-$k$ items, obtaining exact scores for all candidates is often prohibitively expensive.
A natural solution is to exploit multiple information sources of varying cost and fidelity.
We formalize this common structure through a simple abstraction, assuming access to two types of oracles:
\begin{itemize}
	\item a \emph{strong oracle}, which provides (near-)exact valuations but is scarce and costly
	(e.g., human expert verification, full retraining, or high-fidelity simulation);
	\item a \emph{weak oracle}, which yields fast but noisy estimates
	(e.g., low-budget Monte Carlo, lightweight surrogates, sketches, or low-fidelity simulations).
\end{itemize}
Weak oracles provide broad but imperfect coverage, while strong oracles are reliable but budget-limited.
This raises the central question we address:
\begin{quote}
	\emph{How can we combine weak and strong oracles to identify the exact top-$k$ set with high probability, while minimizing the number of expensive/scarce strong oracle calls?}
\end{quote}

We study a {two-resource} setting: weak queries are available in bulk 
while strong queries are the scarce resource we seek to minimize.
This reflects applications where strong evaluation is qualitatively constrained (e.g., expert time or expensive compute),
so it is not meaningful to trade weak and strong calls using a single exchange rate.

In many applications, identifying the top-$k$ items requires access to scarce human expertise.
For example, in document screening for legal compliance, medical triage, or content moderation,
automated signals (e.g., heuristics or lightweight classifiers) can quickly provide noisy relevance scores,
while borderline cases must be reviewed by a human expert~\citep{okati2021differentiable,zhang2015active,mosqueira2023human,raghu2019algorithmic}.
In the top-$k$ ranking setting, the weak oracle corresponds to automated screening, while the strong oracle corresponds to expert judgment,
and the objective is to certify the exact top-$k$ items while minimizing expert evaluations.

The same two-oracle structure arises in fully automated ML pipelines.
A central example is \emph{data valuation}, where the goal is to identify the most valuable training examples for a predictive task.
Shapley values~\citep{ghorbani2019data,jia2019towards} provide a principled measure of data importance, but exact computation is infeasible.
Monte Carlo (MC) approximation yields fast but noisy estimates, while high-budget MC or exact solutions are accurate but expensive.
Here, low-budget MC estimates act as the weak oracle, and high-budget estimates as the strong oracle.
Similar two-oracle structures arise in feature attribution, influence maximization~\citep{kempe2003maximizing}, and hyperparameter screening, where cheap proxies must be validated by expensive evaluations.

\paragraph{Our contributions.}
We {initiate the study} of PAC certification of the exact top-$k$ set with a weak and a strong oracle, formalizing this asymmetric two-oracle model where weak queries provide noisy estimates in bulk and strong queries give exact values but are scarce, with the objective of minimizing strong oracle calls.

\begin{enumerate}
	\item We formalize this two-oracle top-$k$ certification problem and introduce a one-shot \emph{screen-then-certify} baseline (\algbase), proving it makes at most $m(4\varepsilon_{\max})$ strong calls given jointly valid weak CIs, where $m(\cdot)$ denotes the near-tie mass. We establish a conditional lower bound of $\Omega(m(\varepsilon_{\max}))$ for any algorithm given the same weak uncertainty.

	\item We propose \ACE{} (\emph{\textbf{A}daptive \textbf{C}ertification of the \textbf{E}xact top-$k$}), the first adaptive algorithm for this setting, which focuses strong queries on critical boundary items and achieves the same $O\left(\mnearties{4\CImax}\right)$ upper bound while substantially reducing strong calls in practice.
	
	\item We extend this with \ACEW, which adaptively allocates weak budget to ambiguous items before running \ACE, shrinking $\CImax$ and further reducing strong-call costs.
	
	\item Experiments validate our theoretical scaling and demonstrate practical effectiveness.
\end{enumerate}

\section{Related Work}

Identifying top-$k$ items under evaluation budgets spans databases, graph analytics, bandit learning, and multi-fidelity optimization. We focus on a gap not covered by these strands: \emph{PAC certification of the exact top-$k$ set} when only a \emph{noisy weak} oracle and an \emph{expensive and scarce exact} strong oracle are available, with the objective of \emph{minimizing strong oracle calls}. The two-oracle paradigm has also proven effective in other domains, such as correlation clustering~\citep{silwal2023kwikbucks}.

In databases, the Threshold Algorithm (TA)~\citep{FAGIN2003614} and variants combine multiple sorted access paths for exact top-$k$ retrieval. Related ideas appear in graph analytics, where sampling or sketching pre-ranks candidates before exact evaluation~\citep{okamoto2008ranking,bergamini2017closeness,rionato2016betweenness}. These methods embody \emph{screen-then-certify}: inexpensive but imperfect scores identify a candidate set, then exact evaluations certify it. Our one-shot baseline extends this principle to settings where weak scores are \emph{stochastic and unsorted}, requiring PAC guarantees rather than deterministic thresholds.
In ML, valuation functions are often too costly for brute-force scoring: data valuation via Shapley values~\citep{ghorbani2019data,jia2019towards}, feature attribution~\citep{lundberg2017unified,covert2021improving}, and influence processes in networks~\citep{kempe2003maximizing}. Usage of noisy proxies (e.g., gradient norms, truncated influence, low-budget KernelSHAP) is common but often lacks guarantees that the resulting top-$k$ matches the true one. Recent work has highlighted the need for statistical guarantees in these settings, for instance by providing PAC guarantees for top-k feature rankings~\citep{goldwasser2024statistical}. Our framework provides a general method for such certification by combining weak and strong oracles with jointly valid confidence intervals.

Bandits and best-arm identification (BAI) methods~\citep{kalyanakrishnan2012pac} and extensions to top-$k$~\citep{kalyanakrishnan2010efficient,kaufmann2013information} and combinatorial pure exploration~\citep{chen2014combinatorial} adaptively sample \emph{a single stochastic oracle} and offer PAC guarantees. We borrow LUCB-style ideas and uncertainty-based approaches~\citep{kalyanakrishnan2012pac} for deciding \emph{which item} to refine, but our setting differs in two key ways: (i) certification proceeds after a \emph{separate weak phase} that yields \emph{jointly valid} per-item CIs; and (ii) our cost objective counts \emph{only strong calls}, with weak pulls front-loaded (or targeted in an optional ACE-W variant).
Multi-fidelity bandits and Bayesian optimization (BO)~\citep{kandasamy2016multi,kandasamy2017multi,poiani2022multi} use cheap low-fidelity evaluations to guide expensive high-fidelity queries.\looseness=-1

Our setting differs fundamentally from multi-fidelity BAI/BO: (i) we certify the \emph{exact top-$k$ set} via \emph{deterministic interval dominance} after a weak phase with \emph{jointly valid} CIs; (ii) our complexity and lower bounds are stated in \emph{strong-call units}, conditional on a weak budget; and (iii) the strong oracle is treated as (near-) exact while weak observations may be noisy and are only used to screen. Prior work typically optimizes \emph{total} pulls/regret for top-1 or treats fidelities symmetrically as stochastic pulls. Our results give \emph{instance-dependent} upper bounds matched by packing lower bounds in strong-call units.

Recent works have begun to formalize ranking with multiple oracles. \citeauthor{agarwal2022pac} study PAC top-$k$ identification with a pairwise comparison oracle under stochastic transitivity~\citep{agarwal2022pac}, a different setting from our scalar-value oracles. Closer to our work, \citeauthor{jin2025ranking}~\citep{jin2025ranking} study full-ranking recovery from pairwise comparisons provided by multiple heterogeneous oracles. They give bi-level algorithms and instance-dependent bounds in terms of pairwise hardness. In contrast, we address \emph{exact top-$k$ certification} with a two-fidelity model, weak scalar evaluations with variance and an exact strong oracle. The settings are complementary: dueling-style multi-oracle aggregation versus scalar-value certification under a high-fidelity budget.

\section{Problem Setting}
\label{sec:setting}

We are given items $\items=\{x_1,\dots,x_{\nitems}\}$ with unknown ground-truth valuation $\val:\items\to[0,1]$.
Let $\TopK\subseteq\items$ denote the $k$ items with largest $\val(x)$. We assume items are totally ordered by $v$ (ties are broken deterministically by a fixed index order). Let $v_{(k)}$ denote the $k$-th order statistic and $\thresh:=v_{(k)}$.
Define the (unknown) top-$k$ gap $\gap := v_{(k)} - v_{(k+1)}$ (for $k<\nitems$).

Our goal is to output $\widehat T_k=\TopK$ with high probability. Specifically, an algorithm is $(\delta,k)$-\emph{PAC} if it returns $\widehat T_k$ such that
\[
\Prb(\widehat T_k = T_k^\star)\ge 1-\delta.
\]

\paragraph{The two oracles setting.}
As mentioned before, we assume access to two oracles.
A query to the \emph{weak} oracle $\weak$ on $x$ yields $\weakval(x)\in[0,1]$ with mean
$\weakmean(x)=\Exp[\weakval(x)]=\val(x)$ and bounded variance,
\[
\mathrm{Var}(\weakval(x))\le \sigma^2(x)<\infty.
\]
Repeated weak queries are i.i.d.

A query to the \emph{strong} oracle $\strong$ returns $\val(x)$ exactly or with significantly higher precision than the weak oracle. 
In the following, we assume $\strong$ to be exact but proofs extend to noisy strong oracle settings by reserving a small part of~$\delta$, i.e., we allocate failure probability as $\delta=\delta_{\text{weak}}+\delta_{\text{strong}}$.
In the following, we assume $\delta_{\text{strong}}=0$ (exact strong oracle), hence $\delta_{\text{weak}}=\delta$.

Our objective is to minimize the number of strong oracle queries. Weak oracle queries are either (i) uniformly allocated $N$ per item or (ii) allocated from a total budget $B$.

\paragraph{No structural assumptions.}
Our strong-call complexity is governed by the \emph{near-tie mass} $m(\cdot)$ defined below.
In general, $m(\cdot)$ can be as large as $n$; our bounds and lower bounds are stated directly in terms of $m(\cdot)$
and therefore do not require sparsity assumptions on the instance.

\paragraph{Joint confidence intervals from weak pulls.}
After $N$ weak pulls per item, let $\hat v(x)$ be the sample mean and let $r_N(x)$ be a valid $(1-\delta_x)$ half-width for $|\hat v(x)-\weakmean(x)|$ (e.g., empirical Bernstein).
Define per-item confidence intervals
\begin{align*}
	[\CIlo(x),\CIhi(x)] := \bigl[\hat v(x)-r_N(x),\ \hat v(x)+r_N(x)\bigr].
\end{align*}

With $\sum_x \delta_x\le \delta_{\text{weak}}$ (Bonferroni), the joint event
$\Conf:=\bigcap_{x\in\items}\{\val(x)\in[\CIlo(x),\CIhi(x)]\}$
holds with probability at least $1-\delta_{\text{weak}}$.
Unless stated, $\delta_{\text{weak}}=\delta$ (strong exact).

\paragraph{The ambiguous set.}
\label{sec:ambiguous-set}
For $\eta\ge 0$, define the near-tie mass
\[
m(\eta) \;:=\; \bigl|\{x\in D:\ |v(x)-t^\star|\le \eta\}\bigr|.
\]
For each item $x$, let $\CIrad{x}=\tfrac12\big(\CIhi(x)-\CIlo(x)\big)$ denote the confidence radius, and let
$\CImax=\max_{x\in D}\CIrad{x}$ denote the maximum radius across all items.

Given jointly valid weak CIs, we define the ambiguous set
\[
\Amb \;=\; \bigl\{x\in\items:\ \CIlo(x)\le \CIhi_{(k)}\ \text{and}\ \CIhi(x)\ge \CIlo_{(k)}\bigr\},
\]
where $\CIlo_{(k)}$ and $\CIhi_{(k)}$ are the $k$-th largest values in $\{\CIlo(x): x\in\items\}$ and $\{\CIhi(x): x\in\items\}$, respectively. Intuitively, $\Amb$ contains exactly those items whose weak intervals still overlap the uncertain $k$-th boundary band.

\begin{lemma}
	\label[lemma]{lem:ambiguous}
	On the joint event $\Conf$, the ambiguous set satisfies
	$
	|\Amb| \le m(4\CImax).
	$
\end{lemma}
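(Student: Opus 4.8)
The plan is to show that membership in $\Amb$ forces an item's true value into a band of half-width $4\CImax$ around the threshold, i.e.\ $\Amb \subseteq \{x\in\items : |\val(x)-\thresh|\le 4\CImax\}$, from which $|\Amb|\le \mnearties{4\CImax}$ follows immediately from the definition of the near-tie mass. Throughout I would work on the joint event $\Conf$, on which $\CIlo(x)\le\val(x)\le\CIhi(x)$ for every item, and I would repeatedly exploit that each interval has width $2\CIrad{x}\le 2\CImax$, so that $\CIhi(x)\le \val(x)+2\CImax$ and $\CIlo(x)\ge \val(x)-2\CImax$ on $\Conf$.

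The key step is to sandwich the two boundary order statistics $\CIhi_{(k)}$ and $\CIlo_{(k)}$ around $\thresh$. First I would prove $\CIhi_{(k)}\le \thresh+2\CImax$ by a counting argument: any item with $\CIhi(x)>\thresh+2\CImax$ satisfies $\val(x)\ge \CIhi(x)-2\CImax>\thresh$ on $\Conf$, yet at most $k-1$ items have value strictly above the $k$-th order statistic $\thresh$; hence fewer than $k$ upper bounds exceed $\thresh+2\CImax$, which forces the $k$-th largest upper bound below that level. Symmetrically, I would prove $\CIlo_{(k)}\ge \thresh-2\CImax$: each of the $k$ items of $\TopK$ has $\val(x)\ge\thresh$ and therefore $\CIlo(x)\ge \val(x)-2\CImax\ge \thresh-2\CImax$, so at least $k$ lower bounds clear this level and the $k$-th largest among them must too.

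With these two bounds in hand, I would convert the two defining inequalities of $\Amb$ into value bounds. For $x\in\Amb$, the condition $\CIlo(x)\le\CIhi_{(k)}$ gives $\val(x)\le \CIhi(x)\le \CIlo(x)+2\CImax\le \CIhi_{(k)}+2\CImax\le \thresh+4\CImax$, while the condition $\CIhi(x)\ge\CIlo_{(k)}$ gives $\val(x)\ge \CIlo(x)\ge \CIhi(x)-2\CImax\ge \CIlo_{(k)}-2\CImax\ge \thresh-4\CImax$. Combining yields $|\val(x)-\thresh|\le 4\CImax$, establishing the desired inclusion and hence $|\Amb|\le \mnearties{4\CImax}$.

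I expect the main subtlety to be the two counting arguments pinning down $\CIhi_{(k)}$ and $\CIlo_{(k)}$, where some care is needed with ties in $\val$ and the exact count of items strictly above versus at the threshold (this is where the total-order/tie-breaking convention matters). By contrast, the width-to-value conversions are routine once the event $\Conf$ has been fixed and the bound $\CIrad{x}\le\CImax$ is applied uniformly.
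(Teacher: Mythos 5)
Your proposal is correct and follows essentially the same route as the paper's proof: sandwich the order statistics $\CIlo_{(k)}\ge\thresh-2\CImax$ and $\CIhi_{(k)}\le\thresh+2\CImax$ on $\Conf$, then use the defining inequalities of $\Amb$ plus the interval-width bound to place every ambiguous item within $4\CImax$ of $\thresh$. The only cosmetic difference is that you bound $\CIhi_{(k)}$ by counting items whose upper bounds exceed $\thresh+2\CImax$ (at most $k-1$ of them), whereas the paper counts the $n-k+1$ items with $\val\le\thresh$ whose upper bounds stay below that level --- these are complementary counts of the same thing, and your worry about ties is harmless since both counts hold for any tie pattern.
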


\begin{proof}
	Work on the joint event $\Conf$, where $\val(x)\in[\CIlo(x),\CIhi(x)]$ for all $x$.
	Recall that $\CIrad{x}=\tfrac12(\CIhi(x)-\CIlo(x))$ is the half-width, so $\CIhi(x)-\CIlo(x)=2\CIrad{x}$.
	For any $x\in\Amb$, by definition $\CIlo(x)\le \CIhi_{(k)}$ and $\CIhi(x)\ge \CIlo_{(k)}$.
	Among the $k$ items $x$ with $\val(x)\ge\thresh$, each satisfies on $\Conf$:
	\[
	\CIlo(x) \;\ge\; \val(x) - 2\CIrad{x} \;\ge\; \thresh - 2\CImax.
	\]
	Hence at least $k$ lower bounds are $\ge \thresh-2\CImax$, so $\CIlo_{(k)} \ge \thresh - 2\CImax$.
	Similarly, among the $n-k$ items $x'$ with $\val(x')\le\thresh$ (including the $k$-th item),
	each satisfies:
	\[
	\CIhi(x') \;\le\; \val(x') + 2\CIrad{x'} \;\le\; \thresh + 2\CImax.
	\]
	Thus at least $n-k+1$ upper bounds are $\le \thresh+2\CImax$, implying $\CIhi_{(k)} \le \thresh + 2\CImax$.
	Now, for $x\in\Amb$:
	\begin{itemize}
		\item From $\CIhi(x)\ge \CIlo_{(k)} \ge \thresh - 2\CImax$ and $\val(x)\ge \CIlo(x) = \CIhi(x)-2\CIrad{x}$ follows $\val(x) \;\ge\; \CIhi(x) - 2\CImax \;\ge\; \thresh - 4\CImax$.
		\item From $\CIlo(x)\le \CIhi_{(k)} \le \thresh + 2\CImax$ and $\val(x)\le \CIhi(x) = \CIlo(x)+2\CIrad{x}$ follows $\val(x) \;\le\; \CIlo(x) + 2\CImax \;\le\; \thresh + 4\CImax$.
	\end{itemize}
	Therefore $|\val(x)-\thresh|\le 4\CImax$ for all $x\in\Amb$, hence $|\Amb|\le m(4\CImax)$.
\end{proof}

\section{A Simple One-Shot Approach}
\label{sec:basic-alg}

\Cref{alg:stc} shows a one-shot screen-then-certify (\algbase) algorithm.
The intuition behind STC is simple: items whose intervals lie completely above
(or below) the threshold can be decided immediately and need no further
attention. Only the small set of ``borderline'' items remains ambiguous. By
refining this reduced set with strong evaluations, STC achieves significant
savings in strong calls, with complexity proportional to the size of the
near-tie region rather than the total number of items.
Concretely, STC builds per-item weak CIs $[\CIlo(x),\CIhi(x)]$, forms the threshold band $[\CIlo_{(k)},\CIhi_{(k)}]$, certifies clear IN/OUT items, and queries the strong oracle only for the ambiguous items.

\begin{algorithm2e}[t]
	\caption{\algbase}
	\label{alg:stc}
	\DontPrintSemicolon
	\KwIn{$\items$, $k$, $\delta$, weak pulls per item $N$}
	\KwOut{$(\delta,k)$-PAC top-$k$ set $\widehat T_k$}
	\textbf{Weak intervals:} For each $x\in\items$, query $\weak(x)$ $N$ times and build $[\CIlo(x),\CIhi(x)]$ with joint coverage $1-\delta$.\;
	\textbf{Threshold band:} Let $\CIlo_{(k)}$ be the $k$-th largest $\CIlo(x)$ and $\CIhi_{(k)}$ the $k$-th largest $\CIhi(x)$.\;
	\textbf{Partition:} 
	$IN=\{x:\CIlo(x)>\CIhi_{(k)}\}$, 
	$OUT=\{x:\CIhi(x)<\CIlo_{(k)}\}$,
	$A=\items\setminus(IN\cup OUT)$.\;
	\textbf{Strong refinement:} For all $x\in A$, query $\strong(x)$ and set $\CIlo(x)=\CIhi(x)=\strong(x)$.\;
	\textbf{Output:} $\widehat T_k \leftarrow IN \cup \topkof{A}{k-|IN|}$.\;
\end{algorithm2e}

\begin{theorem}
	\label{thm:identify-refine-threshold}
	If the weak confidence intervals are jointly valid (i.e., on $\Conf$), then \Cref{alg:stc} returns the exact top-$k$ set $T_k^\star$.
	Moreover, it issues at most $m(4\CImax)$ strong oracle calls on $\Conf$ (and hence in expectation up to the failure event of probability at most $\delta$).
\end{theorem}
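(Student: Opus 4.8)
The statement bundles two claims—exact recovery and the strong-call bound—which I would handle separately, tackling the bound first since it is essentially a corollary of the work already done. The strong oracle is queried exactly on the items of $A$, so I only need $|A|\le \mnearties{4\CImax}$ on $\Conf$. The key observation is that $A$ coincides with the ambiguous set $\Amb$ of \Cref{lem:ambiguous}: unfolding the partition, $x\notin IN\cup OUT$ is equivalent to $\CIlo(x)\le\CIhi_{(k)}$ together with $\CIhi(x)\ge\CIlo_{(k)}$, which is precisely the membership condition defining $\Amb$. Hence on $\Conf$ the number of strong calls is $|A|=|\Amb|\le \mnearties{4\CImax}$ directly by \Cref{lem:ambiguous}, and the in-expectation claim follows because the complement of $\Conf$ has probability at most $\delta$. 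Along the way I would record the minor fact that $\CIlo_{(k)}\le\CIhi_{(k)}$ (so $IN$ and $OUT$ are disjoint and the partition is well defined): if more than $k-1$ lower bounds exceeded $\CIhi_{(k)}$, then, since $\CIlo(x)\le\CIhi(x)$, more than $k-1$ upper bounds would exceed $\CIhi_{(k)}$, contradicting its definition.

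For correctness I would first pin down the threshold band against $\thresh=v_{(k)}$ on $\Conf$. The $k$ items of $\TopK$ all satisfy $\val(x)\ge\thresh$, so their upper bounds obey $\CIhi(x)\ge\val(x)\ge\thresh$; thus at least $k$ upper bounds are $\ge\thresh$, giving $\CIhi_{(k)}\ge\thresh$. Symmetrically, the $n-k+1$ items ranked $k$ or below satisfy $\val(x)\le\thresh$, so at least $n-k+1$ lower bounds are $\le\thresh$, forcing at most $k-1$ lower bounds above $\thresh$ and hence $\CIlo_{(k)}\le\thresh$. These two inequalities drive the classification: for $x\in IN$ we get $\val(x)\ge\CIlo(x)>\CIhi_{(k)}\ge\thresh$, so $\val(x)>\thresh$ and $x$ is ranked strictly above the boundary, i.e.\ $x\in\TopK$; for $x\in OUT$ we get $\val(x)\le\CIhi(x)<\CIlo_{(k)}\le\thresh$, so $x\notin\TopK$. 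This establishes $IN\subseteq\TopK$ and $OUT\cap\TopK=\emptyset$.

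It then remains to show the refinement step recovers the rest of $\TopK$. Since $IN,OUT,A$ partition $\items$ and $\TopK$ avoids $OUT$, we have $\TopK\setminus IN=\TopK\cap A$, a set of exactly $k-|IN|$ items. After querying $\strong$, every item in $A$ carries its exact value, and $\topkof{A}{k-|IN|}$ selects under the same total order (value, with index tie-break) that defines $\TopK$. Because every member of $\TopK\cap A$ outranks every member of $A\setminus\TopK$ in this order, the top $k-|IN|$ items of $A$ are precisely $\TopK\cap A$; assembling, $\widehat T_k=IN\cup(\TopK\cap A)=\TopK$.

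The one genuinely delicate point—where I would be most careful—is the boundary case $\gap=0$, i.e.\ $v_{(k)}=v_{(k+1)}$. Here borderline items of $A$ can have equal exact values, so the final selection is correct only because $\topkof{A}{\cdot}$ breaks ties by the same fixed index order that defines $\TopK$. With a consistent tie-break the domination argument above still yields the exact set, but this is the step that would fail under an arbitrary tie rule, and it is the part of the write-up I would state most explicitly.
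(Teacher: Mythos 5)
Your proof is correct and follows essentially the same route as the paper's: identify the untouched set $A$ with the ambiguous set of \Cref{lem:ambiguous} to get the $m(4\CImax)$ bound, certify $IN$ and $OUT$ via the band inequalities $\CIhi_{(k)}\ge\thresh$ and $\CIlo_{(k)}\le\thresh$, and observe that the strong refinement step exactly recovers $\TopK\cap A$. You are merely more explicit than the paper about details it leaves implicit (the derivation of the band inequalities, disjointness of $IN$ and $OUT$, and the need for a consistent tie-break when $\gap=0$), which is a virtue, not a divergence.
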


\begin{proof}
	Work on the joint event $\Conf$.
	If $x\in IN$, then $\CIlo(x)>\CIhi_{(k)}\ge \val_{(k)}=\thresh$, and since $\val(x)\ge \CIlo(x)$ on $\Conf$, we get $\val(x)>\thresh$, hence $x\in\TopK$.
	If $y\in OUT$, then $\CIhi(y)<\CIlo_{(k)}\le \thresh$ and $\val(y)\le \CIhi(y)$, so $\val(y)<\thresh$, hence $y\notin\TopK$.
	The remaining $k-|IN|$ members of $\TopK$ must lie in $A$; Step 4 reveals their exact values, so Step 5 picks exactly those via $\topkof{A}{k-|IN|}$.
	Let $A_0:=\{x:\ \CIhi(x)\ge \CIlo_{(k)} \ \wedge\ \CIlo(x)\le \CIhi_{(k)}\}$ be the initial ambiguous set.
	Intervals only shrink; any $x\notin A_0$ can never become ambiguous, hence never queried.
	Thus \algbase issues at most $|A_0|$ strong calls.
	Lemma~\ref{lem:ambiguous} gives $|A_0|\le m(4\CImax)$ on $\Conf$.
\end{proof}

\begin{theorem}
Let $T_\text{weak}$ and $T_\text{strong}$ be the time complexities of a single weak or strong query, respectively. \Cref{alg:stc} runs in $O(nN \cdot T_\text{weak} + m(4\varepsilon_{\max}) \cdot T_\text{strong})$ time and $O(n)$ space. 
\end{theorem}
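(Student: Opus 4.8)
The plan is to account for the cost of each of the five steps of \Cref{alg:stc} separately, then argue that the two stated terms dominate, while simultaneously verifying that no step requires more than linear auxiliary storage.

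First I would handle the weak phase (Step~1). Each of the $\nitems$ items receives $N$ weak pulls, for $\nitems N$ queries in total at cost $T_{\text{weak}}$ each. The crucial observation for the space bound is that the empirical Bernstein confidence interval depends on the sample only through its running mean and variance, both of which can be maintained in a single streaming pass using $O(1)$ state per item (e.g.\ via running sums of values and of squared values). Hence we never materialize the $\nitems N$ raw samples; we store only the $\nitems$ resulting intervals $[\CIlo(x),\CIhi(x)]$, giving $O(\nitems)$ space and $O(\nitems N\cdot T_{\text{weak}})$ time for this step (the $O(N)$ arithmetic per item is absorbed into $N\cdot T_{\text{weak}}$).

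Next I would bound the purely computational steps (Steps~2, 3, and 5). Computing the order statistics $\CIlo_{(k)}$ and $\CIhi_{(k)}$ is a $k$-th-largest selection over $\nitems$ reals, which I would perform with a linear-time selection routine (median-of-medians) in $O(\nitems)$ time and $O(\nitems)$ space rather than by sorting; the partition into $IN$, $OUT$, and $\Amb$ is then a single linear scan. The output step performs a second selection, but only over the refined set $\Amb$, costing $O(|\Amb|)$. For the strong phase (Step~4), \Cref{thm:identify-refine-threshold} (via \Cref{lem:ambiguous}) gives $|\Amb|\le \mnearties{4\CImax}$ on $\Conf$, so the $|\Amb|$ strong queries cost $O\!\left(\mnearties{4\CImax}\cdot T_{\text{strong}}\right)$, and the subsequent selection over $\Amb$ is dominated by this.

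Finally I would sum the contributions. Assuming each oracle call takes at least constant time ($T_{\text{weak}}=\Omega(1)$), the $O(\nitems)$ terms from selection and partitioning are absorbed into $O(\nitems N\cdot T_{\text{weak}})$, and the total becomes $O(\nitems N\cdot T_{\text{weak}}+\mnearties{4\CImax}\cdot T_{\text{strong}})$, while every data structure used (the $\nitems$ intervals, the membership flags, and the selection workspace) fits in $O(\nitems)$ space. The only point requiring genuine care is the space claim: it hinges on the streaming computation of the confidence radii, so I would make explicit that the chosen weak-CI construction is computable online, ensuring the $\nitems N$ samples need not be stored simultaneously.
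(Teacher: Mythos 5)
Your proposal is correct and follows essentially the same decomposition as the paper's proof: $O(nN\cdot T_{\text{weak}})$ for the weak phase, linear-time selection for the threshold band and partition, and $|\Amb|\le \mnearties{4\CImax}$ (via \Cref{lem:ambiguous}) strong queries at $O(T_{\text{strong}})$ each. Your explicit justification of the $O(n)$ space bound via streaming computation of the empirical-Bernstein intervals is a detail the paper's proof leaves implicit, and is a worthwhile addition.
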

\begin{proof}
	The weak phase requires $O(nN \cdot T_\text{weak})$ time to draw $N$ samples per item and construct confidence intervals. The threshold band computation and partitioning take $O(n)$ time using linear-time selection algorithms. The strong refinement queries at most $|A| \le m(4\varepsilon_{\max})$ items, each in $O(T_\text{strong})$ time. In regimes where only $O(k)$ items lie within $4\varepsilon_{\max}$ of the threshold, the strong-refinement time is $O(k\cdot T_\text{strong})$.
\end{proof}

\subsection{Lower Bounds on Strong Oracle Calls}
\label{sec:lower-bounds}
Fix any algorithm whose interaction with the weak oracle induces, at some point in its execution,
a collection of jointly valid intervals $[L(x),U(x)]$ with maximum half-width $\varepsilon_{\max}$
(on an event $\mathcal{E}$ of probability at least $1-\delta_{\text{weak}}$).
Our lower bound is stated in terms of this residual weak uncertainty; it does not depend on whether weak and strong queries are interleaved.

\begin{theorem}
	\label{prop:lb-instance}
	Fix any weak-oracle interaction that yields jointly valid intervals with radius $\varepsilon_{\max}$ on $\mathcal{E}$.
	For any $(\delta,k)$-PAC algorithm, there exist instances consistent with these intervals on which the algorithm must make at least $\Omega\big(m(\CImax)\big)$ strong oracle calls.
\end{theorem}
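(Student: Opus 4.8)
The plan is to prove a conditional, adversarial (packing) lower bound. Conditioned on the event $\Conf$ and on the realized intervals $[\CIlo(x),\CIhi(x)]$ of maximum half-width $\CImax$, I will exhibit a family $\mathcal F$ of instances that are all consistent with these intervals, are statistically indistinguishable under the weak oracle, yet have \emph{different} exact top-$k$ sets, so that the strong oracle is the only means of disambiguation. I then show that identifying the correct answer among $\mathcal F$ forces $\Omega(\mnearties{\CImax})$ strong calls on some member.

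\textbf{Construction.} Fix a threshold level $\thresh$ and the band $I=[\thresh-\CImax,\thresh+\CImax]$. Place $m:=\mnearties{\CImax}$ near-tie items $S=\{s_1,\dots,s_m\}$ whose true values lie in a tiny sub-band $[\thresh-\gamma,\thresh+\gamma]$ with $\gamma\ll\CImax$, and assign the remaining $n-m$ items values more than $\CImax$ away from the band, with exactly $k-\lfloor m/2\rfloor$ of them above (forced IN) and the rest below (forced OUT). This makes $\thresh$ the $k$-th order statistic, forces exactly $h:=\lfloor m/2\rfloor$ of the $S$-items into $\TopK$, and yields near-tie mass exactly $m$ (only $S$-items lie within $\CImax$ of $\thresh$). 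The realized weak intervals for all $S$-items can be taken equal to the common band $I$, so the weak phase reveals nothing about the internal order of $S$, while the forced IN/OUT items receive tight intervals.

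\textbf{Indistinguishability and the combinatorial core.} Let $\mathcal F$ be all instances obtained by permuting the $S$-values, equivalently by choosing which $h$-subset of $S$ is the top-$h$ of $S$; every member is consistent with the same intervals and has the same $\thresh$ and the same $\mnearties{\CImax}=m$. Since the $S$-values differ only within a sub-band of width $2\gamma$, the total variation between the $N$-pull weak transcripts of any two members is $O(\gamma\sqrt N/\sigma)=O(\gamma/\CImax)\to 0$ as $\gamma\to 0$; hence the weak data cannot distinguish members of $\mathcal F$, and a strong query outside $S$ is uninformative about the $S$-order. It then remains to lower-bound the strong queries inside $S$ needed to identify the exact top-$h$ subset of $m$ items whose only distinguishing information is their exact value. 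A standard adaptive adversary argument gives $m-1$ in the worst case: while at least two items of $S$ are unqueried, the adversary can complete the instance in two ways that swap those two items across the rank-$h$ boundary, changing the top-$k$ answer. Combined with indistinguishability, a deterministic algorithm must query $\ge m-1$ items of $S$ on some member of $\mathcal F$; for a randomized $(\delta,k)$-PAC algorithm with $\delta<1/2$ I average over $\mathcal F$ (Yao's principle) and use the two-point swap to force error probability bounded below by a constant whenever fewer than $m-1$ items are queried, contradicting the PAC guarantee and yielding $\Omega(\mnearties{\CImax})$ strong calls on some instance (for $m\ge 2$; the bound is trivial otherwise).

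\textbf{Main obstacle.} The delicate step is making the value-flips genuinely invisible to the weak oracle while keeping every instance consistent with the fixed radius-$\CImax$ intervals and the same near-tie mass. This is handled by the scale separation $\gamma\ll\CImax$: the $S$-values sit inside a sub-band far narrower than the weak resolution, so the intervals remain valid and identical across $\mathcal F$ and the transcript total variation vanishes with $\gamma$, yet the exact values (hence $\TopK$) still differ and are recoverable only by $\strong$. A secondary subtlety is the global coupling of $\thresh$ to the $S$-values, which I neutralize by permuting values \emph{within} $S$ (leaving the overall multiset, $\thresh$, and $\mnearties{\CImax}$ invariant) and keeping all non-$S$ items more than $\CImax$ from the band; ensuring the adaptive two-point completion is robust to the algorithm's randomization is exactly what the Yao averaging provides.
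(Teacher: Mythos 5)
Your proof is correct and shares the paper's skeleton: a packing family supported on $m=m(\CImax)$ near-tie items $S$ that all carry one common weak interval of half-width $\CImax$, with every other item certified by tight intervals, so that any algorithm leaving too much of $S$ unqueried confuses two consistent instances whose top-$k$ sets differ by a swap across the boundary. The differences are in execution, where your version is more careful on two points. (i) The paper instantiates $S$ with two well-separated values $\xi\pm\CImax/2$ and asserts the algorithm ``observes the same weak evidence''; that is only true if the algorithm sees just the realized intervals, since the weak sampling distributions genuinely differ across instances whose means differ by $\CImax$. Your $\gamma\ll\CImax$ sub-band together with the total-variation bound makes the weak transcripts themselves asymptotically indistinguishable, which is the stronger and cleaner statement. (ii) You treat adaptivity and randomization explicitly (the adversary saving the two boundary values $v_h,v_{h+1}$ for the last two unqueried items; Yao averaging for randomized PAC algorithms), which the paper elides; indeed the paper's claim that any strict subset $Q\subsetneq S$ admits $T\neq T'$ with $T\cap Q=T'\cap Q$ fails at $|Q|=m-1$ (the two sets would have different sizes), so its argument also really yields ``at least $m-1$ queries,'' exactly your count. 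Two small repairs on your side: the forced-error claim is not pointwise valid on every run with fewer than $m-1$ queries, because the two unqueried values may lie on the same side of the rank-$h$ boundary; under your Yao averaging the unqueried pair straddles the boundary with probability $h(m-h)/\binom{m}{2}=\Omega(1)$, which is all you need for a constant error probability and hence the $\Omega\big(m(\CImax)\big)$ conclusion. Also take $h=\min\big(k,\lfloor m/2\rfloor\big)$ so the construction remains well-defined when $m>2k$ (your version, with forced-IN padding, in turn covers the regime $m<k$ that the paper's all-top-$k$-inside-$S$ construction does not).
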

\begin{proof}
	Fix the weak-oracle interaction yielding jointly valid intervals with maximum half-width $\CImax$ on the event $\Conf$.
	Choose any $\xi\in(0,1)$ such that $[\xi-2\CImax,\ \xi+2\CImax]\subset(0,1)$ and $\xi-\gap-2\CImax>0$.
	Fix an integer $m$ and choose a subset $S\subseteq\items$ with $|S|=m$.
	For each subset $T\subseteq S$ with $|T|=k$, define the instance $\val_T$ by
	\[
	\val_T(x)=
	\begin{cases}
		\xi+\CImax/2, & x\in T,\\
		\xi-\CImax/2, & x\in S\setminus T,\\
		\xi-\gap,     & x\in \items\setminus S.
	\end{cases}
	\]
	Choose $\gap>2\CImax$ in this construction, so the values $\xi\pm \CImax/2$ are well separated from $\xi-\gap$.
	We now exhibit jointly valid weak intervals consistent with this weak budget:
	
	\noindent$\bullet$ For $x\in S$: set $[\CIlo(x),\CIhi(x)]=[\xi-\CImax,\ \xi+\CImax]$.
	These intervals 
	contain both possibilities $\xi\pm\CImax/2$, so every $x\in S$ remains ambiguous.
	
	\noindent$\bullet$ For $x\notin S$: set $[\CIlo(x),\CIhi(x)]=[\xi-\gap-\tfrac{\CImax}{2},\ \xi-\gap+\tfrac{\CImax}{2}]$.
	These have half-width $\CImax/2\le\CImax$ and lie strictly below $\xi-\tfrac{3}{2}\CImax$ because $\gap>2\CImax$.
	
	With these intervals, at least $k$ lower bounds among $S$ are $\ge \xi-\CImax$, hence $L_{(k)}\ge \xi-\CImax$.
	For $x\notin S$, we have $U(x)\le \xi-\gap+\CImax/2\le \xi-\tfrac{3}{2}\CImax < \xi-\CImax \le L_{(k)}$, 
	so all $x\notin S$ are weak-certified OUT.
	
	Moreover, every $x\in S$ has the same interval $[\xi-\CImax,\xi+\CImax]$, 
	and every $x\notin S$ is already OUT. 
	Consider any algorithm that strongly queries only a strict subset $Q\subsetneq S$.
	Then there exist $T\neq T'$ with $T\cap Q=T'\cap Q$.
	The algorithm observes the same weak evidence and the same strong reveals 
	on $Q$, therefore it outputs the same $\widehat T_k$ on $\val_T$ and $\val_{T'}$,
	which must be wrong on at least one of them.
	Thus any $(\delta,k)$-PAC algorithm must issue $\Omega(|S|)=\Omega(m(\CImax))$ strong calls in the worst case. By construction, this instance satisfies $m(\varepsilon_{\max})\ge m$.
\end{proof}

\section{Adaptive Strong Oracle Certification}
\label{sec:adaptive}

While \algbase is worst-case optimal, it refines \emph{all} ambiguous items.
ACE begins, like \algbase, by constructing initial confidence intervals $[L(x), U(x)]$ for all items using a fixed number of weak oracle queries per item, $N$. It then enters an adaptive refinement loop. Instead of refining a large ambiguous set, \ACE iteratively identifies the two most \emph{critical} items:
the worst IN (smallest $\CIlo$ in the tentative top-$k$) and the best OUT (largest $\CIhi$ outside). It stops once $\CIlo(\text{worst IN}) \ge \CIhi(\text{best OUT})$.
These two items are the ``bottleneck'' preventing certification. By collapsing the
interval of whichever is more uncertain, ACE guarantees monotone progress and
terminates as soon as the worst-in item is safely above the best-out item. This
selection rule ensures that every strong call directly contributes to shrinking
the certification margin.

\begin{algorithm2e}[t]
	\caption{\ACE}
	\label{alg:ace}
	\DontPrintSemicolon
	\KwIn{$\items$, $k$, $\delta$, weak pulls per item $N$}
	\KwOut{$(\delta,k)$-PAC top-$k$ set $\widehat T_k$}
\textbf{Weak intervals:} For each $x\in\items$, query $\weak(x)$ $N$ times and build $[\CIlo(x),\CIhi(x)]$ with joint coverage $1-\delta$.\;
	
	\textbf{Adaptive strong query phase:}\;
	
	\While{true}{
		$S_k \leftarrow$ set of $k$ items with largest \emph{upper} bounds $\CIhi(x)$ \tcp*{optimistic top-$k$}
		$i \leftarrow \arg\min_{x\in S_k} \CIlo(x)$ \tcp*{critical IN}
		$j \leftarrow \arg\max_{y\notin S_k} \CIhi(y)$ \tcp*{critical OUT}
		\If{$\CIlo(i)\ge \CIhi(j)$}{\textbf{return} $S_k$ \tcp*{certified set $\widehat T_k$}}\label{alg:ace:break}
		$x^\star \leftarrow \arg\max\big\{\CIhi(i)-\CIlo(i),\ \CIhi(j)-\CIlo(j)\big\}$\;
		query strong: $v^\star\leftarrow \strong(x^\star)$\;
		set $\CIlo(x^\star)=\CIhi(x^\star)=v^\star$\;\label{alg:ace:collabs}
	}
\end{algorithm2e}

\begin{theorem}
	\label{thm:pac-lucb}
	If the weak confidence intervals are jointly valid (i.e., on $\Conf$), then \Cref{alg:ace} returns the exact top-$k$ set $T_k^\star$.
	Moreover, it issues at most $m(4\CImax)$ strong oracle calls on $\Conf$ (and hence fails with probability at most $\delta$ overall).
\end{theorem}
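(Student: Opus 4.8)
The plan is to mirror the two-part structure of \Cref{thm:identify-refine-threshold}: first establish correctness on the joint event $\Conf$, then bound the strong calls by showing that every item \ACE{} ever queries already lies in the initial ambiguous set $A_0$, so that \Cref{lem:ambiguous} applies almost verbatim. For \textbf{correctness} I would maintain the invariant that on $\Conf$ every current interval contains the true value: this holds initially by joint validity, and collapsing $x^\star$ to $\strong(x^\star)=\val(x^\star)$ preserves it since $\val(x^\star)\in[\CIlo(x^\star),\CIhi(x^\star)]$. At the iteration where the loop returns, the test gives $\CIlo(i)\ge\CIhi(j)$, with $i=\arg\min_{x\in S_k}\CIlo(x)$ and $j=\arg\max_{y\notin S_k}\CIhi(y)$. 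Hence for every $x\in S_k$ and $y\notin S_k$ we get $\val(x)\ge \CIlo(x)\ge\CIlo(i)\ge\CIhi(j)\ge\CIhi(y)\ge\val(y)$, so every member of $S_k$ dominates every non-member; since $|S_k|=k$ and the values are totally ordered, $S_k=\TopK$.

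\textbf{Termination} rests on the monotonicity observation that a collapse only raises lower bounds and lowers upper bounds, so each $[\CIlo(x),\CIhi(x)]$ shrinks over time. I would argue that whenever the loop does not return, the selected $x^\star$ has strictly positive width, hence is not yet collapsed: because $i\in S_k$ and $j\notin S_k$ we always have $\CIhi(i)\ge\CIhi(j)$, so if $x^\star$ had zero width then both $i,j$ would have zero width and in particular $\CIlo(i)=\CIhi(i)\ge\CIhi(j)$, meaning the stopping test would already have fired. Thus every query collapses a previously uncollapsed item, so each item is queried at most once and the loop halts within $\nitems$ iterations.

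For the \textbf{strong-call bound}, write $[\CIlo_0(x),\CIhi_0(x)]$ for the initial intervals and $\CIlo^0_{(k)},\CIhi^0_{(k)}$ for the initial $k$-th largest lower/upper bounds, so $A_0=\{x:\CIhi_0(x)\ge \CIlo^0_{(k)}\ \wedge\ \CIlo_0(x)\le \CIhi^0_{(k)}\}$ exactly as in \Cref{thm:identify-refine-threshold}. Using monotonicity ($\CIlo_0(x)\le\CIlo(x)$, $\CIhi(x)\le\CIhi_0(x)$, and the induced monotonicity of the $k$-th order statistics), the plan is to show any queried item lies in $A_0$. The ``easy'' inclusions are direct: if $x=j$ then $j\notin S_k$ forces $\CIhi(j)\le\CIhi^0_{(k)}$, whence $\CIlo_0(j)\le\CIhi^0_{(k)}$; if $x=i$ then non-termination gives $\CIlo(i)<\CIhi(j)\le\CIhi^0_{(k)}$, whence $\CIlo_0(i)\le\CIhi^0_{(k)}$. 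The remaining inclusion $\CIhi_0\ge\CIlo^0_{(k)}$ I would prove by contradiction: if $\CIhi_0(j)<\CIlo^0_{(k)}$, then at least $k$ items have initial lower bound exceeding $\CIhi(j)$, hence current upper bound exceeding $\CIhi(j)$; as $\CIhi(j)$ is the $(k{+}1)$-th largest current upper bound, these $k$ items must be exactly $S_k$, forcing $\CIlo(i)\ge\CIlo^0_{(k)}>\CIhi(j)$ and contradicting non-termination. The case $x=i$ is symmetric, contradicting instead $i\in S_k$ because $k$ competitors would carry larger current upper bounds. Every queried item therefore lies in $A_0$, and since items are queried at most once, \ACE{} issues at most $|A_0|\le\mnearties{4\CImax}$ strong calls by \Cref{lem:ambiguous}; off $\Conf$, an event of probability at most $\delta$, correctness may fail, which yields the stated $(\delta,k)$-PAC guarantee.

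I expect the last contradiction step to be the main obstacle. Unlike \algbase, whose ambiguity test is defined directly against the fixed band, \ACE{} selects $i$ and $j$ through the \emph{optimistic} set $S_k$ ranked by upper bounds, while the stopping test compares a lower bound against an upper bound; the critical pair therefore need not sit symmetrically about the threshold band. The delicate part is translating ``$x$ was selected and the loop did not stop'' into membership in the \emph{initial} set $A_0$, which requires coupling interval monotonicity with the counting argument that $k$ items crowding one side of $\CIhi(j)$ (resp. $\CIlo(i)$) would have displaced the critical item from (resp. into) $S_k$. Once this is established, the reduction to \Cref{lem:ambiguous} is immediate and the remainder is bookkeeping.
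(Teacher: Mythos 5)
Your proof is correct, and it follows the same overall route as the paper: correctness from the dominance chain $\val(x)\ge\CIlo(x)\ge\CIlo(i)\ge\CIhi(j)\ge\CIhi(y)\ge\val(y)$ at termination, and the strong-call bound by reducing to $|A_0|\le \mnearties{4\CImax}$ via \Cref{lem:ambiguous}. The genuine difference is that you prove, while the paper merely asserts, the key inclusion that every item \ACE{} queries lies in the initial ambiguous set $A_0$. The paper's proof says that each iteration collapses one interval and concludes ``hence at most $|A_0|$ strong queries,'' but collapsing one interval per iteration only bounds the count by $n$; passing from $n$ to $|A_0|$ requires precisely your monotonicity-plus-counting argument (if the critical item's initial interval were disjoint from the threshold band, then at least $k$ items would carry strictly larger current upper bounds, which either expels $i$ from $S_k$, contradicting its selection, or forces $\CIlo(i)\ge\CIhi(j)$ and thus termination). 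The same is true of your positive-width argument showing no item is queried twice, which the paper compresses into a single ``hence'': it needs the observation that if both critical items were collapsed then $\CIlo(i)=\CIhi(i)\ge\CIhi(j)$ would have triggered the stopping test. So your write-up is not a different proof so much as a completed one --- it fills in exactly the steps the paper leaves implicit, and those steps are where the real work lies; the identification of the ``contradiction step'' as the crux is accurate.
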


\begin{proof}
Each iteration of the while loop either terminates (line \ref{alg:ace:break}) or collapses one interval to a point (line \ref{alg:ace:collabs}); hence each strongly queried item is queried at most once.
Let $A_0 := \{x\in\items:\ \CIhi(x)\ge \CIlo_{(k)} \ \wedge\ \CIlo(x)\le \CIhi_{(k)}\}$ denote the ambiguous set induced by the initial weak intervals.
	For correctness, work on the joint event $\Conf$. When the algorithm terminates, $\CIlo(i)\ge \CIhi(j)$. For any $x\in S_k$ and $y\notin S_k$, we have
	$
	\val(x)\ge \CIlo(x)\ge \CIlo(i)\ge \CIhi(j)\ge \CIhi(y)\ge \val(y).
	$
	Therefore, all items in $S_k$ have higher values than all items outside $S_k$, so $S_k=\TopK$.
	Finally, by Lemma~\ref{lem:ambiguous} we have $|A_0|\le m(4\CImax)$ on $\Conf$.
	Since each iteration collapses one interval, \ACE performs at most $|A_0|$ strong queries on $\Conf$.
	Unconditioning $\Conf$ accounts for failure probability at most $\delta$.
\end{proof}

\begin{theorem}
Let $T_\text{weak}$ and $T_\text{strong}$ be the time complexities of a single weak or strong query, respectively. 
\Cref{alg:ace} runs in $O(nN\cdot T_\text{weak} + m(4\CImax) \cdot \log n + m(4\CImax)\cdot T_\text{strong})$ time using at most $O(n)$ space.
\end{theorem}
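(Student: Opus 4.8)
The plan is to bound the runtime phase by phase — weak interval construction, initial partitioning, and the adaptive refinement loop — and to show that each iteration of the loop costs only $O(\log n)$ beyond its single strong query. First I would account for the weak phase exactly as in \Cref{alg:stc}: drawing $N$ samples and forming a confidence interval for each of the $n$ items costs $O(nN\cdot T_\text{weak})$ time and $O(n)$ space. A single linear-time selection pass then computes $\CIlo_{(k)}$ and $\CIhi_{(k)}$, identifies the ambiguous set $A_0$, the count $|IN|$ of clearly-IN items, and the two boundary scalars $\max_{x\in OUT}\CIhi(x)$ and $\min_{x\in IN}\CIlo(x)$, all in $O(n)$ time, which is dominated by the weak phase since $nN\cdot T_\text{weak}\ge n$.

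The core of the argument is the adaptive loop, and the difficulty is that $S_k$ is defined by the \emph{upper} bounds while the critical-IN item $i=\arg\min_{x\in S_k}\CIlo(x)$ is selected by \emph{lower} bounds, so recomputing $i$, $j$, and $S_k$ from scratch would cost $\Theta(n)$ per iteration and yield an unacceptable $\Theta(n\cdot\mnearties{4\CImax})$ bound. To avoid this I would maintain two balanced search trees over the ambiguous items: one keyed by $\CIlo$ restricted to the current optimistic set $S_k$ (whose minimum is the critical IN $i$), and one keyed by $\CIhi$ over the current complement (whose maximum is the critical OUT $j$); the clearly-IN and clearly-OUT items are represented only by the two precomputed scalars, since clearly-IN items always retain the $|IN|$ largest upper bounds and clearly-OUT items always keep upper bounds below the true-IN items, so neither can become critical.

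Each collapse in \cref{alg:ace:collabs} only \emph{decreases} one item's upper bound, so the optimistic set $S_k$ changes by at most a single swap between the two trees; reading off $i$ and $j$, updating the collapsed item's key, and performing this swap are all $O(\log n)$ balanced-tree operations, and building the trees over $A_0$ costs $O(|A_0|\log|A_0|)=O(\mnearties{4\CImax}\log n)$. Combining the pieces, by \Cref{thm:pac-lucb} the loop runs for at most $\mnearties{4\CImax}$ iterations, each contributing $O(\log n)$ bookkeeping and one strong query of cost $T_\text{strong}$, for a total of $O(\mnearties{4\CImax}(\log n+T_\text{strong}))$; adding the weak phase and absorbing the $O(n)$ partition and the tree construction gives the claimed $O(nN\cdot T_\text{weak}+\mnearties{4\CImax}\log n+\mnearties{4\CImax}\cdot T_\text{strong})$, with the $O(n)$ space holding the intervals and the trees.

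I expect the main obstacle to be precisely this $O(\log n)$-per-iteration update: establishing the ``single-swap'' invariant under interval collapse, and arguing that only the ambiguous items need to be maintained dynamically — so that the construction stays within the $\mnearties{4\CImax}\log n$ budget rather than incurring an $O(n\log n)$ build over all items — is where the real care is required.
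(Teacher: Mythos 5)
Your proposal is correct and follows the same overall decomposition as the paper's proof (weak phase in $O(nN\cdot T_\text{weak})$, at most $\mnearties{4\CImax}$ loop iterations by \Cref{lem:ambiguous}/\Cref{thm:pac-lucb}, $O(\log n)$ bookkeeping per iteration via balanced-tree structures, $O(n)$ space), but your implementation of the per-iteration step is genuinely more careful than the paper's. The paper simply invokes a single order-statistic tree over the items to ``identify $S_k$ and find $i$ and $j$''; taken literally, building such a tree over all $n$ items costs $O(n\log n)$, a term that is not dominated by the stated bound unless $N\cdot T_\text{weak} \ge \log n$. Your variant fixes this: restricting the dynamic structures to the ambiguous set $A_0$ (built in $O(|A_0|\log n) = O(\mnearties{4\CImax}\log n)$ after an $O(n)$ selection pass), summarizing the certified items by two static scalars, and using the single-swap invariant (a collapse only decreases one upper bound, so $S_k$ changes by at most one exchange, with $j$ being the item that enters) is exactly what is needed for the theorem as stated. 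One caveat: your justification for why the certified items need no dynamic maintenance is imprecise. The claim that clearly-IN items ``always retain the $|IN|$ largest upper bounds'' is false (a wide ambiguous interval can have a larger upper bound than every clearly-IN item); the correct invariant is that on $\Conf$ every clearly-IN item remains \emph{somewhere} in the top-$k$ by upper bound, and every clearly-OUT item remains outside it, because collapses only shrink intervals. Moreover, certified items \emph{can} be selected as the critical pair: what saves you is that whenever $i\in IN$ one has $U(j)\le U_{(k)}\,(\text{initial}) < L(i)$, and whenever $j\in OUT$ the set $S_k$ must coincide with the $k$ items of largest lower bound, forcing $L(i)\ge L_{(k)}\,(\text{initial}) > U(j)$; in both cases the termination test fires before any strong query, so the static scalars stay valid. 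Spelling out these two short arguments (both conditional on $\Conf$) would close the only real gap in your write-up; with them, your proof is complete and strictly tighter in its accounting than the paper's.
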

\begin{proof}
The initial weak phase requires $O(nN\cdot T_\text{weak})$ time. Each iteration of the adaptive loop requires $O(\log n)$ time using an order-statistic tree (e.g., augmented red-black tree) to identify the optimistic top-$k$ set $S_k$ and find the critical items $i$ and $j$, plus $O(T_\text{strong})$ time for the strong query. At most $m(4\CImax)$ iterations are needed.
Space complexity is $O(n)$ to store the per-item confidence intervals.
\end{proof}

While $\ACE$ can require many strong calls on instances with large near-tie mass, its practical performance is governed by the specific values of the items and the precision of the weak oracle. The number of strong oracle queries directly depends on the number of items whose confidence intervals are close to the decision threshold. We characterize \ACE's complexity in terms of the near-tie mass, establishing matching upper and lower bounds up to a constant factor in the radius parameter.

\begin{theorem}
	\label{thm:instance-dep-tight}
	Let the initial jointly valid intervals have maximum radius $\CImax:=\max_x \CIrad{x}$.
	Any $(\delta,k)$-PAC algorithm must, on some instance consistent with these intervals, make $\Omega\big(\mnearties{\CImax}\big)$ strong calls, and 
	\ACE\ terminates after at most $O\big(\mnearties{4\CImax}\big)$ strong calls.
\end{theorem}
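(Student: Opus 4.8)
The plan is to assemble this statement directly from the two results already in hand, since it is essentially a restatement of the lower bound of \Cref{prop:lb-instance} together with the upper bound of \Cref{thm:pac-lucb}, matched up to the constant factor in the radius argument. First I would handle the lower bound. \Cref{prop:lb-instance} fixes \emph{any} weak-oracle interaction that yields jointly valid intervals of maximum half-width $\CImax$ on the event $\Conf$, and constructs a packing family of instances (indexed by the $\binom{m}{k}$ choices of the true top-$k$ inside an ambiguous block $S$ of size $m$) that are all consistent with those intervals. The indistinguishability argument there shows any $(\delta,k)$-PAC algorithm querying a strict subset of $S$ must err on some instance, forcing $\Omega(|S|)=\Omega(\mnearties{\CImax})$ strong calls. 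Since the initial weak phase of \emph{any} algorithm — in particular \ACE's fixed $N$ pulls per item — is one such interaction, the lower bound applies verbatim and gives the first half of the claim.

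Next I would invoke \Cref{thm:pac-lucb} for the upper bound. On the joint coverage event $\Conf$ (probability at least $1-\delta$), that theorem already establishes that \ACE\ returns the exact set $\TopK$ and collapses at most one interval per iteration, where every queried item lies in the initial ambiguous set $A_0$. Applying \Cref{lem:ambiguous} bounds $|A_0|\le \mnearties{4\CImax}$, so \ACE\ terminates after at most $O\big(\mnearties{4\CImax}\big)$ strong calls. Combining the two directions yields the stated sandwich: $\Omega(\mnearties{\CImax})$ is necessary for every PAC algorithm, and $O(\mnearties{4\CImax})$ suffices for \ACE.

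There is no genuine technical obstacle here, as both bounds are already proven; the only point needing care is the interpretation of ``matching.'' The lower and upper bounds are not a pure multiplicative gap in the \emph{count} but differ in the \emph{radius} fed to the non-decreasing function $\mnearties{\cdot}$, namely $\CImax$ versus $4\CImax$. I would make explicit that $\mnearties{\CImax}\le \mnearties{4\CImax}$ always holds, so the two are tight up to constants precisely when the near-tie mass does not grow substantially as the band widens from $\CImax$ to $4\CImax$ — i.e., when the valuation profile is not pathologically dense in that annulus around $\thresh$. Stating this caveat cleanly, rather than claiming a literal constant-factor match in the query count, is the one place where the proof must be worded precisely.
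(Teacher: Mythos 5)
Your proposal is correct and follows essentially the same route as the paper: the upper bound by observing that \ACE{} queries only items in the initial ambiguous set $A_0$ at most once each and bounding $|A_0|\le \mnearties{4\CImax}$ via \Cref{lem:ambiguous} (as in \Cref{thm:pac-lucb}), and the lower bound by reusing the packing/indistinguishability construction from \Cref{prop:lb-instance} with $m=\mnearties{\CImax}$ near-threshold items. Your closing caveat that the two bounds differ in the radius argument of $\mnearties{\cdot}$ rather than by a literal constant factor in query count matches the paper's own discussion of this gap.
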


\begin{proof}
	On $\Conf$, any ambiguous $x$ must satisfy $|\val(x)-\thresh|\le 4\CImax$; hence $|A_0|\le \mnearties{4\CImax}$.
	\ACE\ only queries items in $A_0$ and never more than once, so its strong calls are $\le \mnearties{4\CImax}$.

For the lower bound, we construct a packing of instances with $m:=\mnearties{\CImax}$ items within $\CImax$ of $\thresh$ whose weak intervals remain indistinguishable under the capped budget (as in the proof of \Cref{prop:lb-instance}).
	Any algorithm that does not query a linear fraction of these $m$ items will output the same set on two instances that differ by flipping one such item across $\thresh$, hence err on one of them.
	Thus $\Omega(m)$ strong calls are necessary.
\end{proof}

\section{The Fully Adaptive Algorithm}
\label{sec:full-adaptive}

\ACE adaptively selects items for \emph{strong} evaluation, but its efficiency depends on the quality of the \emph{weak} confidence intervals obtained from a non-adaptive, fixed budget $N$ per item. We improve this by making the weak phase adaptive as well. The resulting \ACEW\ algorithm has two sequential adaptive phases.

\paragraph{Phase I: Adaptive Weak Allocation (AWA).}
Instead of fixing a uniform per-item budget $N$, we allocate a total weak budget $B$ adaptively. 
In each iteration, AWA forms the current ambiguous set $\Amb$ and selects 
$
x^\star \in \arg\max_{x \in A} \big(U(x)-L(x)\big),
$
the ambiguous item with the widest confidence interval. 
The next weak pull is then allocated to $x^\star$, tightening its anytime-valid CI. 
This strategy concentrates the weak oracle effort on the items that block certification, 
shrinks the maximum ambiguous radius 
$
\varepsilon'_{\max} = \max_{x \in A} \tfrac{1}{2}\big(U(x)-L(x)\big),
$ 
leading to tighter intervals for the ambiguous items. 
To maintain PAC guarantees in Phase~I, we use time-uniform confidence sequences
that are valid for bounded observations.
Since $\tilde v(x)\in[0,1]$ (after normalization), we construct an empirical-Bernstein
confidence sequence $r^{\text{any}}_w(x)$ such that
\[
\Prb\Big(\forall w\ge 1:\ |\hat v_w(x)-\mu(x)|\le r^{\text{any}}_w(x)\Big)
\;\ge\; 1-\tfrac{\delta_{\text{weak}}}{n}.
\]
A union bound over $x\in D$ yields joint time-uniform coverage
$1-\delta_{\text{weak}}$ for all items.

We warm-start the first phase with a minimum of weak queries ($\mmin$) for each item, and we bound the maximum of weak queries ($\mmax$) for each item to avoid spending excessive amounts of budget on pathological items. 

\begin{algorithm2e}[htb]
	\caption{\ACEW}
	\label{alg:acew}
	\DontPrintSemicolon
	\KwIn{$\items$, $k$, total weak budget $B$,  $\delta$,  $\mmin$, and  $\mmax$}
	\KwOut{$(\delta,k)$-PAC top-$k$ set $\widehat T_k$}
	\textbf{Phase I (AWA):} Initialize per-item weak counts and anytime-valid CIs; give each $x\in\items$ a warm-start of $\mmin$ weak pulls. Set $B\gets B - n\cdot \mmin$\;
	
	\While{$B>0$}{
		
		$\Amb \leftarrow$ identify current ambiguous set \;
		\lIf{$\Amb$ is empty}{\textbf{break}}

		$x^\star \leftarrow \argmax_{x \in A \;:\; w(x) < w_{\max}} (U(x)-L(x))$
		Draw one weak sample on $x^\star$, update its anytime-valid CI, and set $B \gets B-1$.
	}
	Freeze the weak CIs for Phase II. \; 
	\textbf{Phase II (ACE):} Run Alg.~\ref{alg:ace} initialized with  $[\CIlo,\CIhi]$.\;
	\Return $\widehat T_k$.
\end{algorithm2e}

\paragraph{Phase II: Adaptive Strong Certification.}
Given the frozen weak intervals $\{[\CIlo(x),\CIhi(x)]\}$, Phase~II is identical to \ACE: it adaptively queries the strong oracle on the remaining ambiguous items until the final top-$k$ is certified.

\begin{theorem}
	\label{thm:awa-ace-complexity}
	Let $\{[\CIlo(x),\CIhi(x)]\}$ be the intervals at the end of Phase~I
	with
	$\CImax' := \max_{x\in A} \tfrac12\big(\CIhi(x)-\CIlo(x)\big)$. 
	Then Phase~II (\ACE) issues at most $O\big(\mnearties{4\CImax'}\big)$ strong oracle calls. 
\end{theorem}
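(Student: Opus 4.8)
The plan is to reduce the claim to a sharpened form of \Cref{lem:ambiguous} in which the global radius $\CImax$ is replaced by the ambiguous-set radius $\CImax'$. Since Phase~II executes \ACE\ verbatim on the frozen intervals, the counting argument from the proof of \Cref{thm:pac-lucb} applies unchanged: each item is strongly queried at most once, and \ACE\ only ever touches items in the ambiguous set $A$ fixed at the end of Phase~I. Hence the number of strong calls is at most $|A|$, and it suffices to show $|A|\le\mnearties{4\CImax'}$. Because Phase~I uses time-uniform confidence sequences, the joint event $\Conf$ covers the data-dependent stopping, so we may work on $\Conf$ throughout.

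The core difference from \Cref{lem:ambiguous} is that the threshold band $[\CIlo_{(k)},\CIhi_{(k)}]$ must be bounded using only $\CImax'$. First I would record two radius-free inequalities valid on $\Conf$: since every $x\in\TopK$ has $\CIhi(x)\ge\val(x)\ge\thresh$, at least $k$ upper bounds exceed $\thresh$, giving $\CIhi_{(k)}\ge\thresh$; symmetrically every item with $\val\le\thresh$ has $\CIlo(x)\le\thresh$, giving $\CIlo_{(k)}\le\thresh$. Then I would split the $k$ items of $\TopK$ by ambiguity: a non-ambiguous one is clearly IN, so $\CIlo(x)>\CIhi_{(k)}\ge\thresh\ge\thresh-2\CImax'$, while an ambiguous one has $\CIrad{x}\le\CImax'$, so $\CIlo(x)\ge\val(x)-2\CImax'\ge\thresh-2\CImax'$; either way all $k$ lower bounds are $\ge\thresh-2\CImax'$, hence $\CIlo_{(k)}\ge\thresh-2\CImax'$. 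The symmetric split of the $n-k+1$ items with $\val\le\thresh$ (non-ambiguous ones clearly OUT, ambiguous ones of radius $\le\CImax'$) yields $\CIhi_{(k)}\le\thresh+2\CImax'$.

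To finish, for any $x\in A$ its own radius is $\le\CImax'$, so the interval-dominance step of \Cref{lem:ambiguous} closes with $\CImax'$ in place of $\CImax$: $\val(x)\ge\CIhi(x)-2\CImax'\ge\CIlo_{(k)}-2\CImax'\ge\thresh-4\CImax'$, and symmetrically $\val(x)\le\thresh+4\CImax'$, so $|A|\le\mnearties{4\CImax'}$. Combined with the reduction above, this bounds Phase~II's strong calls by $\mnearties{4\CImax'}$ on $\Conf$, and unconditioning $\Conf$ costs only the failure probability $\delta$.

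The main obstacle is exactly the non-uniform refinement introduced by AWA: unlike in \Cref{lem:ambiguous}, non-ambiguous items may retain large radii (they receive only the $\mmin$ warm-start pulls), so one cannot naively substitute $\CImax'$ for $\CImax$ everywhere. The crux is recognizing that any such wide-interval item is precisely one that has been certified clearly IN or clearly OUT, and that its lower or upper bound already lies on the correct side of $\thresh$ by the radius-free inequalities $\CIhi_{(k)}\ge\thresh$ and $\CIlo_{(k)}\le\thresh$. Consequently these items never push the band past what the ambiguous radii $\CImax'$ allow, which is what makes the sharper bound go through.
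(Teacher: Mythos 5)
Your proof is correct, and it is worth noting that the paper states \Cref{thm:awa-ace-complexity} \emph{without any proof}, evidently treating it as an immediate corollary of \Cref{lem:ambiguous} and \Cref{thm:pac-lucb}. Your proposal shows why it is not immediate: \Cref{lem:ambiguous} bounds the ambiguous set by $\mnearties{4\CImax}$ with $\CImax$ the maximum radius over \emph{all} items, whereas after AWA's non-uniform allocation the items outside $\Amb$ may retain radii far larger than $\CImax' = \max_{x\in\Amb}\CIrad{x}$, so a naive substitution of $\CImax'$ for $\CImax$ is invalid --- this is exactly the obstacle you name, and it is a real one. Your sharpened lemma repairs it correctly: the radius-free inequalities $\CIhi_{(k)}\ge\thresh$ and $\CIlo_{(k)}\le\thresh$ (valid on $\Conf$ with no assumption on radii), combined with splitting the band-determining items into ambiguous ones (radius $\le\CImax'$) and non-ambiguous ones (whose relevant bound already lies on the correct side of $\thresh$), yield $\CIlo_{(k)}\ge\thresh-2\CImax'$ and $\CIhi_{(k)}\le\thresh+2\CImax'$, after which the dominance step closes with $4\CImax'$ exactly as in \Cref{lem:ambiguous}. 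Combined with the counting argument of \Cref{thm:pac-lucb} (each item queried at most once, only items ambiguous with respect to the frozen intervals ever queried), this gives the claimed $\mnearties{4\CImax'}$ bound on $\Conf$; your remark that the anytime-valid confidence sequences make $\Conf$ cover Phase~I's data-dependent stopping is also the right justification for conditioning. Two small steps deserve to be made explicit in a full write-up, though neither is a gap: (i) why a non-ambiguous item of $\TopK$ must be clearly IN rather than clearly OUT (on $\Conf$, $\CIhi(x)\ge\val(x)\ge\thresh\ge\CIlo_{(k)}$ rules out $\CIhi(x)<\CIlo_{(k)}$), and symmetrically for items with $\val(x)\le\thresh$; and (ii) the claim that \ACE\ never queries an item outside the initial ambiguous set, which the paper's own proof of \Cref{thm:pac-lucb} asserts rather than proves --- one checks that any critical pair $(i,j)$ failing the stopping test is ambiguous for the current intervals, and that current ambiguity implies initial ambiguity because lower bounds only increase and upper bounds only decrease as intervals collapse.
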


\begin{theorem}
\Cref{alg:acew} runs in $O(B\cdot T_\text{weak} + B \log n + m(4\CImax') \cdot \log n + m(4\CImax')\cdot T_\text{strong})$ time using at most $O(n)$ space , where $\CImax'$ is the maximum radius after Phase I.
\end{theorem}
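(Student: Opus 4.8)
The plan is to bound the running time phase by phase, since \Cref{alg:acew} executes Phase~I (AWA) and Phase~II (ACE) strictly sequentially, and then to verify the $O(n)$ space bound. I would treat the weak-oracle cost, the per-iteration bookkeeping, and the strong-oracle cost as three separate contributions, matching the three nontrivial terms in the stated bound.

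First, for Phase~I, observe that the phase consists of a warm-start of $n\cdot\mmin$ weak pulls followed by a loop that spends exactly one unit of weak budget per iteration. Hence the total number of weak pulls across Phase~I is at most $B$, and the number of loop iterations is at most $B$. Each pull invokes the weak oracle once and then updates the corresponding anytime-valid confidence sequence; maintaining running statistics (sample mean and empirical second moment) makes each such CI update $O(1)$, so the oracle-evaluation cost is $O(B\cdot T_\text{weak})$. It remains to bound the per-iteration bookkeeping, which is the crux discussed below; I will argue it is $O(\log n)$ per iteration, yielding the $O(B\log n)$ term.

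Second, for Phase~II, the key simplification is that the algorithm reuses the frozen intervals from Phase~I, so no further weak-oracle calls occur and the expensive $O(nN\cdot T_\text{weak})$ term from the stand-alone \ACE analysis is absent. By \Cref{thm:awa-ace-complexity}, Phase~II issues at most $O\big(\mnearties{4\CImax'}\big)$ strong calls, and by the running-time analysis of \Cref{alg:ace} each iteration of the adaptive strong loop costs $O(\log n)$ to identify the optimistic top-$k$ set and the critical items $i,j$ via an order-statistic tree, plus one strong query costing $O(T_\text{strong})$. Multiplying gives $O\big(\mnearties{4\CImax'}\log n + \mnearties{4\CImax'}\cdot T_\text{strong}\big)$ for Phase~II. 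Summing the two phases yields the claimed time bound; for space, all structures (two order-statistic trees, a width-indexed structure, and the per-item counts and interval endpoints) store $O(1)$ information per item, so the total is $O(n)$.

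The main obstacle is the per-iteration bookkeeping in Phase~I: unlike in \ACE, where the intervals are fixed and the threshold band is static, AWA shrinks one interval per iteration, which moves the order statistics $\CIlo_{(k)}$ and $\CIhi_{(k)}$ that define the ambiguous set $\Amb$. The plan is to maintain two order-statistic trees keyed on the lower and upper endpoints so that $\CIlo_{(k)}$ and $\CIhi_{(k)}$ are retrievable in $O(\log n)$, together with a max-query structure (a balanced BST keyed on interval width) over the current members of $\Amb$. When the selected item $x^\star$ has its CI shrunk, both thresholds move to an \emph{adjacent} order statistic; breaking ties by the fixed index order from the problem setup, only $O(1)$ items change membership in $\Amb$ per iteration, each such membership update costing $O(\log n)$. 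The cap $\mmax$ only removes items from consideration monotonically and can be handled by lazy deletion. Retrieving $x^\star$ is then a single max query, so the bookkeeping is $O(\log n)$ per iteration and $O(B\log n)$ in total, which closes the argument.
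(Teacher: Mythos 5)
Your overall decomposition --- Phase~I oracle cost $O(B\cdot T_\text{weak})$, Phase~I bookkeeping at $O(\log n)$ per pull, Phase~II via \Cref{thm:awa-ace-complexity} combined with the \ACE\ per-iteration analysis, and $O(n)$ space --- matches the paper's proof. However, one step in your Phase~I argument is genuinely false: the claim that when $x^\star$'s interval shrinks, ``only $O(1)$ items change membership in $\Amb$ per iteration.'' It is true that the thresholds $\CIlo_{(k)}$ and $\CIhi_{(k)}$ move by at most one \emph{position} in the sorted order, but their \emph{values} can jump: if $x^\star$ is among the top $k$ upper bounds and its upper bound drops below the old $(k{+}1)$-st largest, then $\CIhi_{(k)}$ falls to that $(k{+}1)$-st largest value, and every ambiguous item whose lower bound lies strictly between the new and the old threshold leaves $\Amb$ in that single iteration --- there can be $\Theta(n)$ such items. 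Tie-breaking by a fixed index order does not help, because the problem is the value gap between consecutive order statistics, not ties.

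The gap is repairable, and the repair is precisely the observation the paper's proof leans on: since the anytime-valid CIs are nested, intervals only shrink during Phase~I, hence $\CIhi_{(k)}$ is non-increasing and $\CIlo_{(k)}$ is non-decreasing over time, and membership in $\Amb$ is \emph{monotone} --- items can only leave, never (re-)enter. Therefore the total number of membership updates over all of Phase~I is at most $n$, costing $O(n\log n)$ in aggregate, which is absorbed into the $O(B\log n)$ term because the warm start forces $B\ge n\cdot\mmin\ge n$. In short, your worst-case per-iteration claim must be replaced by this amortized argument; with that substitution your proof goes through and coincides with the paper's, which states the monotonicity explicitly (``items can only leave $A$, never enter'') and charges $O(\log n)$ per weak query for a max-heap keyed by interval width.
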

\begin{proof}
	Phase I performs $B$ weak queries, each taking $O(T_\text{weak})$ time, with $O(\log n)$ overhead per query to identify the widest interval in the ambiguous set using a max-heap keyed by interval width. The ambiguous set $A$ is maintained incrementally; since intervals only shrink during Phase I, items can only leave $A$, never enter. Phase II runs \ACE on the resulting intervals, requiring $O(m(4\CImax') \cdot (\log n + T_\text{strong}))$ time using an order-statistic tree as described above. Space complexity is $O(n)$ to maintain per-item statistics and CIs.%
\end{proof}

\section{Discussion}

Our conditional lower bound scales as $\Omega\big(m(\CImax)\big)$, while our upper bounds scale as $O\big(m(4\CImax)\big)$.
Thus there remains a constant-factor gap of 4 in the radius parameter inside $m(\cdot)$. This does not necessarily translate to a factor-of-4 gap in the number of strong calls because the relationship between $m(\CImax)$ and $m(4\CImax)$ is instance-dependent.
Closing this gap
appears to require either a different certification mechanism or additional assumptions on the instance (e.g., regularity of the value distribution near $\thresh$); we leave this as an open question.

Furthermore, our framework's performance and guarantees are subject to the following conditions.
The strong-call complexity is fundamentally tied to the ``near-tie mass,'' $\mnearties{\cdot}$. Instances where many item values cluster near the top-$k$ threshold $\thresh$ are inherently difficult, and the cost for any valid algorithm will be proportionally high. Moreover, the practical benefit of our adaptive methods depends on the quality of the weak oracle. If weak pulls are excessively noisy (leading to a large $\CImax$), the initial confidence intervals will be too wide to effectively prune the search space, and the strong-call savings over the \algbase baseline will be minimal. 
Finally, the presented algorithms are specifically designed for identifying a top-$k$ set of a fixed, pre-specified size $k$. Alternative ranking problems, such as value-based thresholding (e.g., finding all items with $\val(x) > \tau$), are not directly addressed and would require a separate treatment.

Moreover, our strong-call bounds are stated \emph{conditional} on the realized weak uncertainty
$\CImax=\max_{x\in\items}\tfrac12\big(\CIhi(x)-\CIlo(x)\big)$ after the weak phase.
Thus, selecting a uniform weak budget $N$ can be nontrivial when the separation around the top-$k$ boundary is unknown.
In general, any valid per-item CI construction implies a (problem-dependent) relationship between $N$ and $\CImax$.
Rather than requiring users to estimate an unknown gap parameter, a practical approach is to choose $N$ by directly monitoring the weak-phase ambiguity:
compute the ambiguous set $\Amb$ induced by the weak CIs and increase $N$ (e.g., double $N$) until $|\Amb|$ fits within the available strong budget.
This requires no knowledge of an explicit gap and directly targets the quantity that drives strong cost
(since on the joint CI event, both \algbase and \ACE make at most $|\Amb|$ strong calls).
For a fixed total weak budget, \ACEW provides an alternative by adaptively concentrating weak pulls on ambiguous items, often reducing
the post-allocation radius $\CImax'$ and hence the required number of strong calls.

\section{Experiments}\label{sec:experiments}
We evaluate our methods on synthetic data to validate theoretical scaling, and on a real-world data valuation task to demonstrate practical utility.
We compare \ACE and \ACEW to the \algbase baseline, the natural non-adaptive method in the two-oracle PAC top-$k$ certification setting.
We additionally introduce a thresholding algorithm (\emph{TA-Certify}) that first ranks items by the weak estimates, then queries the strong oracle sequentially in that order.
Crucially, it uses the {same jointly valid weak confidence intervals} $[L(x),U(x)]$ as our methods to derive an early-stopping certificate:
after each strong evaluation, it maintains the current $k$-th largest \emph{verified} strong value $t_S$, and stops as soon as the worst verified top-$k$ item is guaranteed (under the weak CIs) to exceed any remaining unverified item.\looseness=-1

We implemented all algorithms in Python 3.11.
All experiments run on a consumer laptop with 13th Gen Intel(R) Core(TM) i5-1335U @ 4.6 GHz and 16~GB of RAM.
The source code is provided in the supplementary materials.

\begin{figure}[htbp]
	\centering
	\subfigure[Scalability in $n$]{%
		\label{fig:scalability-n}%
		\includegraphics[width=0.3\linewidth]{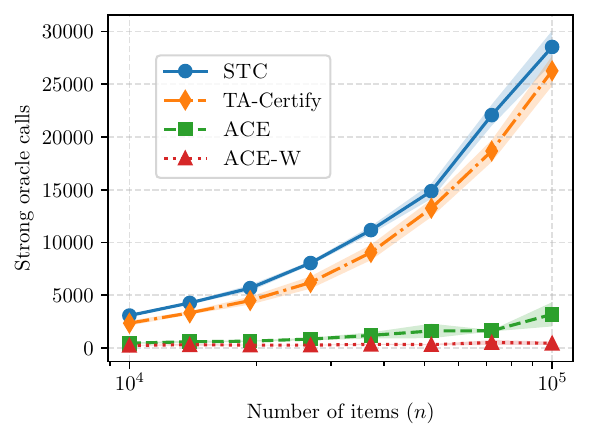}%
	}\hfill
	\subfigure[Scalability in $k$]{%
		\label{fig:scalability-k}%
		\includegraphics[width=0.3\linewidth]{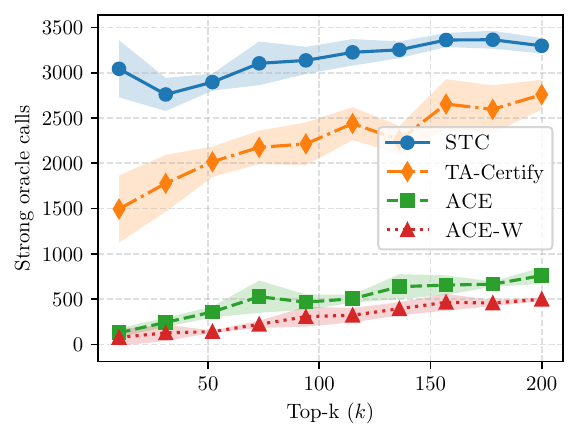}%
	}\hfill
	\subfigure[Varying difficulty ratio.]{%
		\label{fig:hardness}%
		\includegraphics[width=0.3\linewidth]{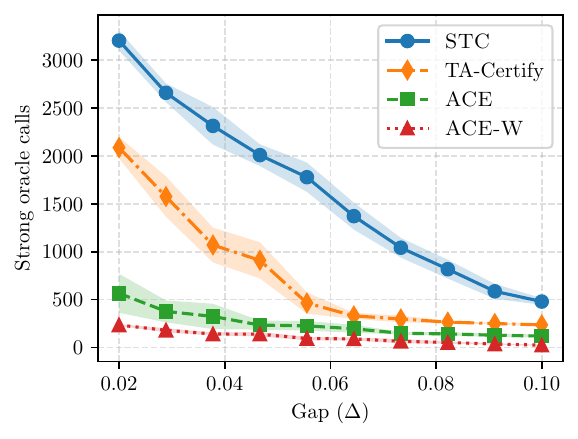}%
	}
	\caption{Performance on synthetic data. Our adaptive \ACE and \ACEW algorithms consistently outperform the non-adaptive baselines.}
	\label{fig:synthetic}
\end{figure}

\smallskip
\noindent
\textbf{Scalability and Robustness.}
We generate values with a clear gap $\Delta$ around the decision threshold $t^\star=0.5$, with $k$ top items, up to $2 k$ near-tie items in $[t^\star-\eta,t^\star+\eta]$ ($\eta=0.5\Delta$), and the remainder well below threshold.   
The \emph{strong oracle} returns the exact value $v(x)$.
The \emph{weak oracle} aligns with our theoretical model from \Cref{sec:setting}, returning noisy observations. Specifically, a query for item $x$ yields $\tilde v(x) = v(x) + \xi$, where $\xi \sim \mathcal{N}(0, \sigma^2)$ is random noise. We report the number of strong calls averaged over 10 runs with 95\% CIs.
Across all experiments, all methods always recover the exact top-$k$ set. 
We use $n=10^4$, $k=100$, $\Delta=0.05$, $\sigma=0.1$, $N=12$, $B=n\cdot N$, $\mmin=6$, and $\mmax=B$ unless stated otherwise.

\Cref{fig:synthetic} validates the theoretical scaling of our algorithms.
\emph{(a) Scalability in $n$:} \Cref{fig:synthetic}a demonstrates the critical dependency on the ambiguous set size $|A|$. As $n$ increases under fixed item density, the number of items falling into the near-tie region grows linearly ($|A| \approx 1.8 \times 10^3$ at $n=10^4$ to $\approx 5.3 \times 10^4$ at $n=3 \times 10^5$). 
Since STC must verify every item in $A$, its cost explodes linearly with $n$. 
In contrast, ACE decouples strong queries from the total ambiguous mass, growing only sub-linearly ($\approx 200 \to 800$ calls). 
Notably, \ACEW (red) achieves effectively constant cost ($\approx 100$ calls) regardless of $n$, proving that adaptively targeting weak budget to boundary items prevents the accumulation of false positives that plague the non-adaptive methods.

\emph{(b) Scaling with $k$:} While all methods scale linearly with the target set size $k$ (\Cref{fig:synthetic}b), \ACE and \ACEW exhibit a significantly flatter slope, indicating superior per-item certification efficiency.

\emph{(c) Robustness to Hardness:} As the gap $\Delta$ narrows (\Cref{fig:synthetic}c), the problem becomes harder and costs rise for all methods. However, \ACE maintains a large efficiency gap over the baselines, avoiding the sharp cost spike seen in STC.

\smallskip
\noindent
\textbf{Empirical tightness of bounds.}
We measure the tightness ratio $\rho = \text{(strong calls)} / m(\CImax)$, where $m(\CImax) = |\{x : |\val(x) - \thresh| \le \CImax\}|$ (computed post-hoc). Across all settings, \ACE achieves $\rho = 2.56\%$ on average while \ACEW achieves $1.31\%$. At $n = 3.16 \times 10^5$, both algorithms use $< 0.3\%$ of the lower bound (90.7 calls vs. $m(\CImax) \approx 2.2 \times 10^5$ for \ACEW).

This empirical gap is dramatically smaller than the theoretical worst-case factor of 4 in the radius parameter. While the upper bound guarantees $O(m(4\CImax))$ calls, practical performance is $0.01$--$0.06 \cdot m(\CImax)$. This supports our hypothesis that the constant factor arises from worst-case compounding of uncertainties, and that real-world value distributions rarely exhibit the pathological alignment required to trigger the worst case. The adaptive weak allocation phase provides consistent $\approx 2\times$ improvement over uniform sampling across all settings.

\smallskip
\noindent
\textbf{Real-World Data Valuation.}
We evaluate our framework on a data valuation task for text classification. 
The objective is to efficiently identify the $k$ most valuable training documents for a machine learning model. 
We use the \emph{20 Newsgroups} dataset~\citep{lang1995newsweeder}, focusing on documents from two categories: \texttt{comp.graphics} and \texttt{sci.med}. 
The model trains a $k$-Nearest Neighbors (kNN) classifier to distinguish between these topics. 
A document's value is defined by its contribution to the classifier's accuracy, quantified via the \emph{Shapley value}~\citep{ghorbani2019data}. 
Exact computation is prohibitively expensive, motivating our two-oracle framework. 
Both oracles rely on Monte Carlo (MC) estimation using the same validation data. 
The \emph{weak oracle} uses $N=2^6$ MC rounds, providing fast but high-variance estimates. 
The \emph{strong oracle} uses $2^{13}$ MC rounds, yielding low-variance estimates treated as ground truth.
We set $\delta=0.05$. For \ACEW, we use a total weak budget $B=N\cdot n$, with $\mmin=16$ and $\mmax=128$.
We consider the goal of identifying the top-$10$ most valuable documents out of $n=100$.

\Cref{table:results} reports the performance metrics (mean $\pm$ std) over 8 independent runs.
\algbase\ is the most expensive, requiring an average of 83.5 strong calls to certify the top-10. 
TA improves upon this by exploiting sorted access, reducing calls to 66.0.
\ACE\ significantly outperforms both, cutting strong calls to 33.3 (a reduction of $\approx$50\% over TA and $\approx$60\% over \algbase).
\ACEW\ further reduces strong calls to 29.4 and shrinks the ambiguous set size ($|A|$) by $\approx$8\% compared to the uniform weak allocation.
Because the strong oracle is computationally intensive, these reductions in query count translate to massive wall-clock savings. 
While TA provides a $1.2\times$ speedup over \algbase, \ACE\ and \ACEW\ achieve speedups of $2.4\times$ and $2.8\times$, respectively.

\begin{table}[t]
	\centering
	\caption{{Data Valuation Results (Top-10 Identification).} 
		Performance metrics on the 20 Newsgroups dataset (mean $\pm$ std over 10 runs). 
	}
	\label{table:results}
	\resizebox{0.8\textwidth}{!}{
		\begin{tabular}{lcccc}
			\toprule
			\textbf{Algorithm} & \textbf{Strong Calls} & \textbf{Ambiguity ($|A|$)} & \textbf{Total Time (s)} & \textbf{Speedup vs STC} \\
			\midrule
			\algbase (STC) & $83.5 \pm 19.1$ & $83.5 \pm 19.1$ & $10{,}907 \pm 2{,}837$ & $1.0\times$ \\
			TA-Certify & $66.0 \pm 26.5$ & $83.5 \pm 19.1$ & $8{,}768 \pm 3{,}527$ & $1.2\times$ \\
			\textbf{ACE (Ours)} & $\mathbf{33.3} \pm 9.3$ & $83.5 \pm 19.1$ & $4{,}575 \pm 1{,}349$ & $\mathbf{2.4\times}$ \\
			\textbf{ACE-W (Ours)} & $\mathbf{29.4} \pm 11.4$ & $\mathbf{76.9} \pm 19.5$ & $\mathbf{3{,}932} \pm 1{,}593$ & $\mathbf{2.8\times}$ \\
			\bottomrule
		\end{tabular}
	}
\end{table}

\section{Conclusion and Future Work}

We introduced a two-oracle framework for certifying the exact top-$k$ set under PAC guarantees. Our algorithms \ACE\ and \ACEW\ adaptively focus strong evaluations on critical items, yielding instance-dependent complexity governed by the near-tie mass and substantially reducing strong oracle usage in experiments.

Future directions include: (i) closing the constant-factor gap between our $O(m(4\CImax))$ upper bound and $\Omega(m(\CImax))$ lower bound, either through tighter analysis or additional structural assumptions; (ii) fully adaptive weak-phase stopping using anytime-valid confidence sequences to handle unknown separation margins while maintaining joint coverage guarantees.

\end{document}